\title{GLIB: Efficient Exploration for Relational Model-Based Reinforcement Learning via Goal-Literal Babbling}
\author{Rohan Chitnis$^*$, Tom Silver$^*$, Joshua Tenenbaum, Leslie Pack Kaelbling, Tom\'{a}s Lozano-P\'{e}rez\\\textnormal{MIT Computer Science and Artificial Intelligence Laboratory}\\\texttt{\textnormal{\{ronuchit, tslvr, jbt, lpk, tlp\}@mit.edu}}}
\let\oldnl\nl
\newcommand{\nonl}{\renewcommand{\nl}{\let\nl\oldnl}}
\newenvironment{tightlist}%
{\begin{list}{$\bullet$}{%
    \setlength{\topsep}{0in}
    \setlength{\partopsep}{0in}
    \setlength{\itemsep}{0in}
    \setlength{\parsep}{0in}
    \setlength{\leftmargin}{1.5em}
    \setlength{\rightmargin}{0in}
}
}%
{\end{list}
}
\newcommand{\secref}[1]{(Section \ref{#1})}
\newcommand{\figref}[1]{Figure~\ref{#1}}
\newcommand{\algref}[1]{Algorithm~\ref{#1}}
\newcommand{\tabref}[1]{Table~\ref{#1}}
\newcommand{\thmref}[1]{Theorem~\ref{#1}}
\newtheorem{theorem}{Theorem}
\newtheorem{lem}{Lemma}
\newtheorem{defn}{Definition}
\newtheorem{cor}{Corollary}
\def\thickhline{%
  \noalign{\ifnum0=`}\fi\hrule \@height \thickarrayrulewidth \futurelet
   \reserved@a\@xthickhline}
\def\@xthickhline{\ifx\reserved@a\thickhline
               \vskip\doublerulesep
               \vskip-\thickarrayrulewidth
             \fi
      \ifnum0=`{\fi}}
\newlength{\thickarrayrulewidth}
\newenvironment{problemsetting}[1][htb]
  {
   \begin{algorithm}[#1]%
  }{\end{algorithm}}
\newcommand{\irale}{\textsc{iral}{\small e}}
\newcommand{\expo}{\textsc{expo}}
\newcommand{\rex}{\textsc{rex}}
\newcommand{\ilm}{\textsc{ilm}}
\newcommand{\glib}[1]{\textsc{glib{\small {#1}}}}
\newcommand{\glibg}[1]{\textsc{glib-g{\small {#1}}}}
\newcommand{\glibl}[1]{\textsc{glib-l{\small {#1}}}}
\newcommand{\zpk}{\textsc{lndr}}
\newcommand{\I}{\mathcal{I}}
\renewcommand{\P}{\mathcal{P}}
\newcommand{\Q}{\mathcal{Q}}
\renewcommand{\S}{\mathcal{S}}
\newcommand{\A}{\mathcal{A}}
\newcommand{\D}{\mathcal{D}}
\DeclarePairedDelimiterX{\infdivx}[2]{(}{)}{%
  #1\;\delimsize\|\;#2%
}
\begin{document}
\maketitle

\begin{abstract}
We address the problem of efficient exploration for transition model learning in the relational model-based reinforcement learning setting without extrinsic goals or rewards.
Inspired by human curiosity, we propose goal-literal babbling (\glib{}), a simple and general method for exploration in such problems.
\glib{} samples relational conjunctive goals that can be understood as specific, targeted effects that the agent would like to achieve in the world, and plans to achieve these goals using the transition model being learned.
We provide theoretical guarantees showing that exploration with \glib{} will converge almost surely to the ground truth model.
Experimentally, we find \glib{} to strongly outperform existing methods in both prediction and planning on a range of tasks, encompassing standard PDDL and PPDDL planning benchmarks and a robotic manipulation task implemented in the PyBullet physics simulator. Video: \url{https://youtu.be/F6lmrPT6TOY} Code: \url{https://git.io/JIsTB}

\end{abstract}

\section{Introduction}
\label{sec:intro}

Human curiosity often manifests in the form of a question: ``I wonder if I can do X?''
A toddler wonders whether she can climb on the kitchen counter to reach a cookie jar.  
Her dad wonders whether he can make dinner when he's missing one of the key ingredients.
These questions lead to actions, actions may lead to surprising effects, and from this surprise, we learn.
In this work, inspired by this style of playful experimentation \cite{gil1994learning,cropper2019playgol}, we study exploration via goal-setting for the problem of learning \emph{relational transition models} to enable robust, generalizable planning.

\begin{figure}[t]
  \centering
    \noindent
    \includegraphics[width=\columnwidth]{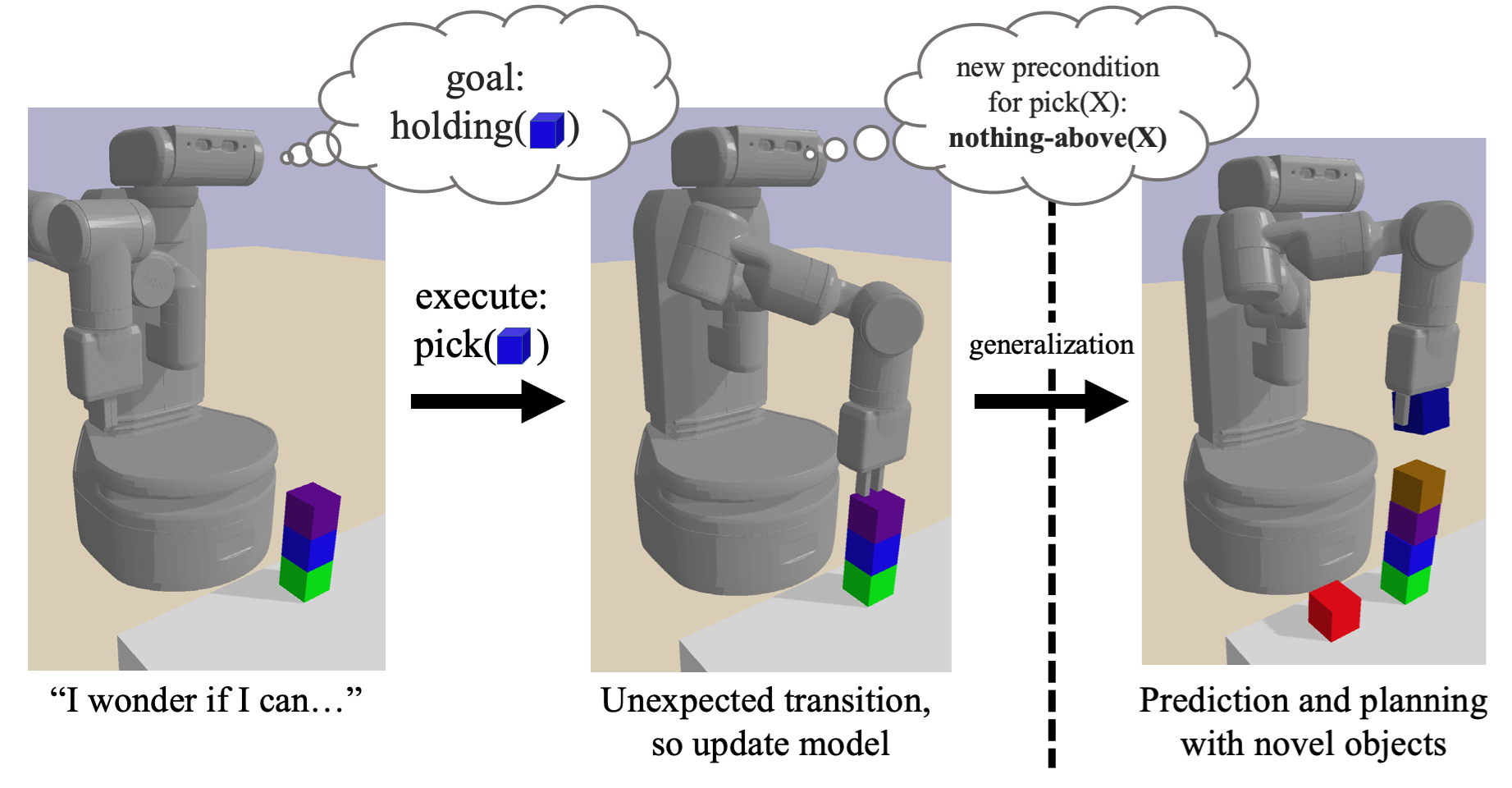}
    \caption{We study goal babbling as a paradigm for exploration in relational transition model learning. \emph{Left:} A robot in our PyBullet domain sets itself a goal of \texttt{holding} the blue object. \emph{Middle:} The robot (mistakenly) believes the goal can be achieved by executing \texttt{pick} on the blue object. When it tries this plan, it fails due to the purple object in the way. From this and previous data, the robot can update its transition model. In this case, the robot learns that a precondition of picking an object is that nothing is on top of it. \emph{Right:} In more complex environments involving novel objects, the robot can plan with its learned model to achieve goals.}
  \label{fig:teaser}
\end{figure}

Transition model learning is central to model-based reinforcement learning (RL), where an agent learns an approximate transition model through online interaction with its environment.
The learned transition model can be used in combination with a planner to maximize a reward function or reach a goal.
If the transition model is relational, that is, represented in terms of lifted relations between objects in the environment, then it generalizes immediately to problems involving new and different objects than those previously encountered by the agent~\cite{dvzeroski2001relational,tadepalli2004relational}.

In this paper, we address the problem of \emph{efficient exploration} for online relational transition model learning.
This setting isolates the exploration problem in model-based RL, and can be understood as model-based RL without an extrinsic reward function.
Previous approaches to exploration for relational model-based RL have considered extensions of classical tabular methods like \textsc{R-max} and E$^3$ to the relational regime ~\cite{lang2012exploration,ng2019incremental}.
Prior work in the AI planning literature has also considered exploration for learning and refining planning operators~\cite{gil1994learning,Shen94autonomouslearning,wang1996planning,rodrigues2011active}.
In practice, these approaches tend to be myopic, exploring locally and cautiously while leaving far-away regions of the state space unexplored.

In pursuit of an exploration strategy that can drive an agent toward interesting regions of the state space, we propose a novel family of exploration methods for relational transition model learning called \emph{Goal-Literal Babbling} (\glib{}).
The basic approach is illustrated in Figure \ref{fig:teaser}.
Goals in \glib{} are conjunctions of literals (relations); informally, these can be understood as specific, targeted effects that the agent would like to achieve in the world.
Goals are proposed according to a novelty measure~\cite{lehman2008exploiting}.
A particular instantiation of \glib{} is characterized by an integer $k$, which bounds the number of literals involved in each goal conjunction, and a choice between lifted or ground goals.
Lifted \glib{} (\glibl{}) drives the agent to situations that are radically new, like creating a stack of three blocks for the first time.
Ground \glib{} (\glibg{}) may be preferable when interesting transitions are difficult to express with a short lifted conjunction.

To try to achieve the goals babbled by \glib{}, we plan using the current (often flawed) transition model that we are in the process of learning.
Conventional wisdom suggests that planning with incorrect models should be avoided due to the potential for compounding errors, especially in a learning-to-plan setting, where these errors could lead the agent to build a model that is incorrect and irreparable.
However, we show both in theory and in practice that this intuition does not apply: we provide theoretical guarantees that \glib{} cannot get stuck in a subregion of the reachable state space \secref{subsec:theory}, and we show empirically that \glib{} yields very strong performance across a range of tasks \secref{sec:experiments}.

This work has the following contributions. 
(1) We propose \glib{}, a novel family of exploration methods for online relational transition model learning.
(2) We prove that exploration under \glib{} is almost surely recurrent (in the sense of Markov chains) and almost surely converges to the ground truth model 
given mild assumptions on the planner, learner, and domain.
(3) We evaluate model prediction error and planning performance across six tasks, finding \glib{} to consistently outperform several prior methods.
(4) We assess the extent to which \glib{} is sensitive to the particular choice of model-learning algorithm, finding \glib{} to be the best performing exploration method regardless of the model learner.
We conclude that \glib{} is a simple, strong, and generally applicable exploration strategy for relational model-based RL.
\section{Related Work}
\label{sec:related}

\textbf{Learning and Refining Planning Operators.}
Learning relational transition models has been the subject of a long line of work in the planning literature \cite{benson1995inductive,rodrigues2011active,cresswell2013acquiring,zhuo2010learning,arora2018review}.
These methods, which focus on the learning problem, rather than the exploration problem, are typically offline, assuming a fixed dataset as part of the problem specification.
Our focus is on \emph{online} exploration problem, where the agent must collect its own data.

Other work in the planning community has considered planning operator \emph{refinement}: gathering data to improve an imperfect set of operators \cite{gil1994learning,Shen94autonomouslearning,wang1996planning}.
These methods are useful when one has a decent model in hand and an error is discovered; they suggest actions for gathering data to correct the error.
Existing methods for relational model refinement, such as \textsc{expo}~\cite{gil1994learning},
can be combined with a fallback strategy that selects actions when no existing operators require refinement.
However, these methods lack \emph{intrinsic drive}; there is nothing directing the agent toward unexplored regions of the state space.
We use \expo{} as a baseline in our experiments.

\textbf{Exploration in Model-Based RL.}
Exploration is one of the fundamental challenges of reinforcement learning (RL).
Exploration strategies for model-based RL are particularly relevant to our  setting, though typically the agent is given rewards to optimize.
E$^3$~\cite{kearns2002near} and \textsc{R-max}~\cite{brafman2002r} are two such classic strategies.
Dyna~\cite{dyna} was one of the first RL approaches to plan with the model being learned.
More recently, \citet{sekar2020planning} consider exploration for deep model-based RL, using learned transition models to plan in a latent space via backpropagation; this work shares with ours the idea of planning for exploration.
In contrast to all these works, though, our focus is on the \emph{relational} regime.

\citet{walsh2010efficient} proves the existence of a \textsc{kwik} (knows what it knows) algorithm for efficient exploration in relational model learning.
As pointed out by \citet{lang2012exploration}, Walsh's algorithm provides theoretical insight but has never been realized in practice.
\citet{lang2012exploration} propose \rex{}, which extends E$^3$ to the relational regime.
\irale{}~\cite{rodrigues2011active} learns lifted rule-based transition modules using the heuristic that an action should be explored if its preconditions \emph{almost} hold.
Unlike other methods, \irale{} does not perform lookahead with the learned model.
Recent work by \citet{ng2019incremental} proposes \ilm{}, an extension of \rex{} that incorporates a notion of model \emph{reliability} into the count function that is used to determine whether a state is worth exploring.
We include \rex{}, \irale{}, and \ilm{} as baselines in our experiments.

\textbf{Goal Babbling, Robotics, Deep RL.}
Our use of the term ``babbling'' is an homage to prior work in robotics on goal babbling, originally proposed for learning kinematic models \cite{goalbabbling1,baranes2013active}.
\citet{forestier2016modular} use goal babbling in a continuous model-based setting where trajectory optimization suffices for planning.
Other recent work considers goal babbling for automatic curriculum generation in model-free deep RL
\cite{florensa2017automatic,forestier2017intrinsically,laversanne2018curiosity,campero2020learning}.
For instance, \citet{nair2018visual} consider ``imagining'' goals for RL from visual inputs.
Our work continues this line of work on goal babbling for exploration, but unlike these prior works, we are interested in learning \emph{relational} models that are amenable to symbolic planning.
\section{Problem Setting}
\label{sec:prelim}

We study exploration for online transition model learning in stochastic, relational domains. 
As in typical RL settings, an agent interacts episodically and online with a (fully observable) \emph{environment}, defined by a state space $\S$; action space $\A$; transition model $P(s' \mid s, a)$ with $s, s' \in \S, a \in \A$; initial state distribution $\I$; and episode length $T$. The agent does \emph{not} know the transition model, but it does know $\S$ and $\A$. As it takes actions in the environment, the agent observes states sampled from the transition model.

A \emph{predicate} is a Boolean-valued function. A predicate applied to objects (resp. variables) is a ground (resp. unground or lifted) \emph{literal}.
Objects and variables may be typed or untyped.
All states $s \in \S$ are \emph{relational} with respect to a known set of predicates $\P$; that is, each $s$ is represented as a set of ground literals constructed from the predicates in $\P$. Any ground literal not in $s$ is considered to be false.
The set of objects is finite and fixed within an episode but varies between episodes.
Actions in our setting are also relational over the same object set; an action $a \in \A$ is a ground literal constructed from a known set of predicates $\Q$.
Since $\P$, $\Q$ and the set of objects are all finite, the state and action spaces are also finite (but typically very large).

\textbf{Evaluation.} Unlike in typical RL settings, here the agent does not have a reward function to optimize;  rather, its objective is to learn a model that is as close as possible to the true environment transition model~\cite{ng2019incremental}. We measure the quality of the learned model by evaluating its prediction error on random (state, action) pairs. However, we are also interested in the agent's ability to \emph{use} its learned model to solve tasks via planning.
We therefore also measure the quality of the learned model by testing it on a set of \emph{planning problems}, where each planning problem is made up of an initial state and a goal (a binary classifier, expressed in predicate logic, over states).
For each planning problem, the agent uses a planner and its learned model to find a policy $\pi : \mathcal{S} \to \mathcal{A}$. This planner may return \emph{failure} if it is unable to find a policy. If a policy is returned, it is executed from the initial state for a fixed horizon or until the goal is reached.\footnote{In deterministic environments, a planner returns a sequential plan rather than a policy. Going forward, we will not make this distinction between plans and
policies; at an intuitive level, a planner simply produces actions that drive the agent toward a given goal.}

\textbf{The Importance of Exploration.}
The overall problem setting is summarized in the pseudocode above.
As the agent interacts with the environment, it builds a dataset $\D$ of transitions it has seen thus far, and uses these transitions to learn a model $\hat{P}(s' \mid s, a)$.
The accuracy of this model will depend critically on the quality of the dataset $\D$.
The \textsc{Explore} method is responsible for choosing actions to gather the dataset; it must guide the agent through maximally informative parts of the transition space.
Our objective in this paper is to design an \textsc{Explore} method that efficiently gathers data and leads to good prediction and planning performance with as few environment interactions as possible.

\begin{problemsetting}[t]
  \SetAlgoNoEnd
  \DontPrintSemicolon
  \SetKwFunction{algo}{algo}\SetKwFunction{proc}{proc}
  \SetKwProg{myalg}{Problem Setting}{}{}
  \SetKwProg{myproc}{Subroutine}{}{}
  \SetKw{Continue}{continue}
  \SetKw{Break}{break}
  \SetKw{Return}{return}
  \myalg{\textsc{Online Model Learning}}{
    \nonl \textbf{Input:} State space $\S$ and action space $\A$.\;
    \nonl \textbf{Input:} Sampler from initial state distribution $\I$.\;
    \nonl \textbf{Input:} Episode length $T$.\;
    \nonl \textbf{Initialize:} $\D \gets \emptyset$, a dataset of transitions.\;
    \nonl \textbf{Initialize:} $\hat{P}$, an initial transition model estimate.\;
    \nonl \While{$\hat{P}$ is still improving}{
    \nonl $s \sim \I$\;
    \nonl \For{$T$ timesteps}{
    \nonl $a \gets $\textsc{Explore}$(\S, \A, \D, \hat{P}, s)$\;
    \nonl Execute $a$, observe next state $s'$.\;
    \nonl $\mathcal{D} \gets \mathcal{D} \cup \{ (s, a, s') \}$\;
    \nonl $\hat{P} \gets $ \textsc{LearnModel}$(\D, \hat{P})$\;
    \nonl $s \gets s'$\;
    }}
    \nonl \Return final learned model $\hat{P}$\;
    }
\end{problemsetting}

\section{Relational Learning and Planning}
\label{sec:modellearning}

Online transition model learning requires implementations of \textsc{LearnModel} and \textsc{Exploration}; our focus in this work is the latter, which we address in Section \ref{sec:approach}.
In this section, we briefly describe the two existing techniques for \textsc{LearnModel} that we use in our experiments.

Following prior work on exploration for relational model-based RL \cite{lang2012exploration,ng2019incremental}, we consider transition models that are parameterized by \emph{noisy deictic rules} \cite{pasula2007learning}. 
A noisy deictic rule (NDR) is made up of an unground action literal, a set of \emph{preconditions}, which are (possibly negated) unground literals that must hold for the rule to apply, and a categorical distribution over \emph{effects}, where each possible outcome is a set of (possibly negated) unground literals whose variables appear in the action literal or preconditions.
An NDR effect distribution may include a special \emph{noise outcome} to capture any set of effects not explicitly modeled by the other elements of the distribution.
An NDR \emph{covers} a state $s$ and action $a$ when there exists a binding of the NDR's variables to objects in $(s, a)$ that satisfy the action literal and preconditions of the NDR.
Each action predicate is associated with a \emph{default} NDR, which covers $(s, a)$ when no other NDR does.
A collection of NDRs is a valid representation of a transition model when exactly one NDR covers each possible $(s, a)$.
The associated distribution $P(s' \mid s, a)$ is computed by 1) identifying the NDR that covers $(s, a)$, 2) grounding the effect sets with the associated binding, and 3) applying the effects (adding positive literals and removing negative literals) to $s$ to compute $s'$.

\citet{pasula2007learning} propose a greedy search algorithm for learning a collection of NDRs;
we call this method Learning NDRs (\zpk{}) and use it as our first implementation of \texttt{LearnModel}, following prior work~\cite{lang2012exploration,ng2019incremental}.\footnote{The exploration method used in this original \zpk{} work is called ``action babbling'' in our experiments.}
To assess the extent to which the relative performance of different implementations of \texttt{Explore} are dependent on the transition model learner, we also consider a second implementation of \texttt{LearnModel}, \textsc{tilde}~\cite{foldt}, which is an inductive logic programming method for rule learning in deterministic domains.

Noisy deictic rules are plug-compatible with PPDDL \cite{younes2004ppddl1}, the probabilistic extension of PDDL, which is a standard description language for symbolic planning problems.
PPDDL planners consume a specification of the transition model, initial state, and goal, and return a policy.
FF-Replan \cite{ffreplan} is a PPDDL planner that determinizes the transition model and calls the FastForward planner \cite{ff}, replanning when an observed transition does not match the determinized model.
We use FF-Replan with single-outcome determinization as our planner for all experiments.

\section{Exploration via Goal-Literal Babbling}
\label{sec:approach}

In this section, we describe \glib{} (goal-literal babbling), our novel implementation of the \textsc{Explore} method for online relational transition model learning. 
See \algref{alg:glib} for pseudocode and \figref{fig:timeline} for an illustration of \glib{} in the Keys and Doors domain \cite{konidaris2007building}.

\begin{figure*}[t]
  \centering
    \noindent
    \includegraphics[width=\textwidth]{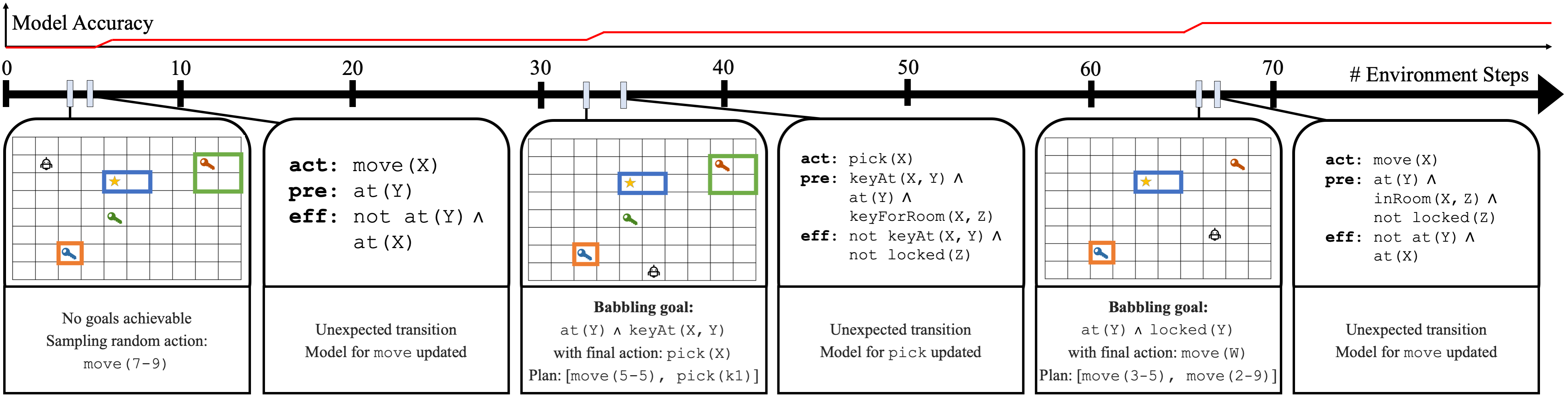}
    \caption{An agent exploring with \glibl{} in Keys and Doors. The agent begins with a trivial transition model that predicts empty effects for all actions. Under this model, no goals are achievable, so the agent randomly samples \texttt{move(7-9)}. Observing the subsequent transition, the agent updates its transition model for \texttt{move}, but overgeneralizes, believing now that moving anywhere is possible, when in fact the agent may only move to unlocked rooms. (NDR probabilities are omitted for clarity.)
    Later, the agent babbles a goal and action that induce a two-step plan to move to and pick the green key. After executing the plan, the agent updates its model for \texttt{pick}. Finally, the agent babbles another goal and action that induce a plan to move to a locked location. Observing the failure of this plan, the agent updates its model for \texttt{move}, correcting its previous overgeneralization. }
  \label{fig:timeline}
\end{figure*}

\subsection{Goal-Literal Babbling (\glib{})}
\label{subsec:glib}
\glib{} builds on the intuition that an exploration method should drive an agent to large, interesting regions of the transition space, even when such regions are far from the agent's initial state.
To this end, the first key idea of \glib{} is that the agent should randomly set itself, or \emph{babble}, goals that are conjunctions of a small number of literals. 
Intuitively, these goal literals represent a targeted set of effects that the agent would like to achieve in the world.
For example, in the Keys and Doors domain (\figref{fig:timeline}), the agent may rapidly move to a location with a key by setting itself the goal $\exists \texttt{X,Y. at(X)} \land \texttt{keyAt(Y,\;X)}.$
\glib{} has two main parameters: $k$, an upper bound on the conjunction size; and a mode, representing whether the chosen goals are \emph{lifted}, as in the example above, or \emph{ground}, as in \texttt{at(3-3)}.

The second important aspect of \glib{} is that each goal literal $G$ is proposed not in isolation, but together with an action $a_G$ that the agent should execute if and when that goal is achieved. 
The motivation for babbling actions in addition to goals is that to learn an accurate transition model, the agent must thoroughly explore the space of \emph{transitions} rather than states.
A proposed goal-action pair $(G, a_G)$ can be interpreted as a transition that the agent would like to observe.
Like the goals, actions can be ground or lifted, optionally sharing variables with the goal in the lifted case.
For example, in the Keys and Doors domain, the agent might babble the action $\texttt{pick(Y)}$ alongside the goal $\exists \texttt{X,Y.at(X)} \land \texttt{keyAt(Y,\;X)}$, indicating that it should pick the key while it is at the key's location (\figref{fig:timeline}).

If we were to naively sample goals uniformly from all possible ($\leq k$)-tuples of literals, this may lead the agent to repeatedly pursue the same goals. Instead, \glib{} uses a novelty measure \cite{lehman2008exploiting}, only selecting goals that have never appeared as a subset of any previous state.
For this reason, the \textsc{SampleGA} method in \algref{alg:glib} takes as input the current dataset $\D$. 
In practice, we use caching to make this computation very efficient. Empirically, we found that sampling only novel goals is imperative to the overall performance of \glib{}.

Once the agent has selected a goal-action pair $(G, a_G)$, it uses a planner to find a policy for achieving $G$ from the current state $s$, under the current learned model $\hat{P}$. If a policy is found (i.e., the planner does not return \emph{failure}), $a_G$ is appended as its final action. In \emph{lifted} mode, $a_G$ will be lifted, so we first run \textsc{GroundAction}, which grounds $a_G$ by randomly sampling values for any variables that are not already bound in the goal; see \figref{fig:timeline} for examples. We then execute the policy until it terminates.
If a policy is not found after $N$ tries, we fall back to taking a random action.

The choice of mode (\emph{ground} or \emph{lifted}) can have significant effects on the performance of \glib{}, and the best choice depends on the domain. On one hand, novelty in \emph{lifted} mode has the tendency to over-generalize: if \texttt{location5} is the only one containing an object, then lifted novelty cannot distinguish that object being at \texttt{location5} versus elsewhere. On the other hand, novelty in \emph{ground} mode may not generalize sufficiently, and so can be much slower to explore.

\textbf{Filtering out unreasonable goals.} We ensure that the goals babbled by \textsc{SampleGA} are reasonable by filtering out two types of goals: \emph{static} goals and \emph{mutex} goals. Static goals are goals where every literal in the conjunction is predicted to be left unchanged by the current learned model $\hat{P}$; such a goal will be either always true or always false under $\hat{P}$, and is therefore not useful to babble. Mutex goals are goals where any pair of literals in the conjunction cannot be satisfied simultaneously under $\hat{P}$; such a goal can never hold, and there is no use in expending planning effort to try to find a policy for achieving it. Mutex detection is known to be as hard as planning in the worst case, but there is a rich body of work on approximation methods~\cite{mutex1,mutex2,fd}. In this paper, we use MMM~\cite{mmmmutex}, a Monte Carlo algorithm that extracts approximate mutexes from a set of samples of reachable states, which we obtain through random rollouts of $\hat{P}$.
Note that the sets of static and mutex goals must be recomputed each time $\hat{P}$ is updated.

\subsection{Is Planning for Exploration Wise?}
\glib{} rests on the assumption that planning with a faulty transition model can ultimately lead to a better model.
In general, planning with faulty models is risky: prediction errors will inevitably compound over time.
However, when planning for exploration in particular, it is important to distinguish two failure cases: (1) a policy is found with the learned model and it does not execute as expected;
(2) no policy is found, even though one exists under the true model.
Interestingly, (1) is not problematic; in fact, it is ideal, because following this policy gives useful data to improve the model.
The only truly problematic case is (2).
\citet{wang1996planning} identifies a similar problem and attempts to reduce its occurrence by using a learning algorithm that errs on the side of ``more general'' models.
In our setting, a ``most general'' model is not well-defined.
Instead, if no policy is found after $N$ babbling tries, we fall back to a random action.
This allows us to escape situations where no goals seem possible.

\begin{algorithm}[t]
  \SetAlgoNoEnd
  \DontPrintSemicolon
  \SetKwFunction{algo}{algo}\SetKwFunction{proc}{proc}
  \SetKwProg{myalg}{Algorithm}{}{}
  \SetKwProg{myproc}{Subroutine}{}{}
  \SetKw{Continue}{continue}
  \SetKw{Break}{break}
  \SetKw{Return}{return}
  \myalg{\textsc{Explore: Goal-Literal Babbling}}{
    \nonl \textbf{Input:} $\S, \A, \D, \hat{P}, s$. \tcp*{\footnotesize See \secref{sec:prelim}.}
    \nonl \textbf{Hyperparameter:} Bound on literal count $k$.\;
    \nonl \textbf{Hyperparameter:} The mode [\emph{ground} or \emph{lifted}].\;
    \nonl \textbf{Hyperparameter:} Number of sampling tries $N$.\;
    \nonl \textbf{Internal state:} Policy in progress $\pi$. Starts \textbf{null}.\;
    \nonl \If{$\pi$ is not \emph{\textbf{null}}}
    {
    \nonl \Return $\pi(s)$\;
    }
    \nonl \For{$N$ iterations}
    {
    \tcp{\footnotesize Sample novel goal-action pair.}
    \nonl $(G, a_G) \gets$ \textsc{SampleGA}($\S, \A, \D, k, \text{mode}$)\;
    \tcp{\footnotesize Plan from current state.}
    \nonl $\pi \gets$ \textsc{Plan}($s$, $G$, $\hat{P}$)\;
    \nonl \If{$\pi$ found}
    {
    \nonl \If{mode is \emph{lifted}}{
    \nonl $a_G \gets$ \textsc{GroundAction}($a_G, G, s, \pi$)
    }
    \nonl Make $a_G$ be the final output of $\pi$.\;
    \nonl \Return $\pi(s)$\;
    }
    }
    \tcp{\footnotesize Fallback: random ground action.}
    \nonl \Return
    \textsc{Sample}($\A$)\;
    }
\caption{\small{Pseudocode for the goal-literal babbling (\glib{}) family of algorithms. See text for details.}
}
\label{alg:glib}
\end{algorithm}

\subsection{Theoretical Guarantees}
\label{subsec:theory}
We now present theoretical guarantees for the asymptotic behavior of \glib{}. 
Our main theorem gives conditions under which exploration with \glib{} is \emph{almost surely (a.s.) recurrent}; with probability 1, the agent will not get infinitely stuck in any subregion of the transition space.
We follow with a corollary that gives conditions under which the learned transition model will converge (a.s.) to the ground truth.

We say that a state $s \in \S$ is \emph{reachable} if there exists any sequence of at most $T$ actions that, with positive probability, leads to $s$ from an initial state.
Let $\Omega$ be the set of all reachable transitions: state-action pairs $(s, a)$ where $s$ is any reachable state in $\S$ and $a$ is any action in $\A$.
Note that any policy $\pi$ induces a Markov chain over state-action pairs $(s, a)$.
Let $\textsc{MC}(\pi, \I, \S, \A)$ denote this Markov chain, and let \textsc{Random} denote a uniformly random policy.
Let $S^t$ and $A^t$ be random variables for the state and action at time $t$.
\begin{defn}[Recurrent environment]
\label{defn:recurrent}
A \emph{recurrent} environment is one in which the Markov chain $\textsc{MC}(\textsc{Random}, \I, \S, \A)$ is recurrent over $\Omega$, that is, $\forall (s, a) \in \Omega, \forall t \ge 0, \exists t' > t$ s.t. $Pr(S^{t'} = s, A^{t'} = a) > 0$.
\end{defn}
Informally, a recurrent environment is one in which a random policy will infinitely revisit all reachable states.
\begin{defn}[$\epsilon$-sound planner]
\label{defn:soundplanner}
A planner $\textsc{Plan}$ is $\epsilon$-\emph{sound} if for any state $s$, goal $G$, and transition model $\hat{P}$, $\textsc{Plan}(s, G, \hat{P})$ returns a policy $\pi$ only if following $\pi$ from $s$ reaches $G$ within the horizon $T$ in model $\hat{P}$ with probability at least $\epsilon$. If no such $\pi$ exists, $\textsc{Plan}(s, G, \hat{P})$ reports failure.
\end{defn}
If an $\epsilon$-sound planner returns a policy, that policy is guaranteed to have at least $\epsilon$ probability of succeeding. (If the planner reports failure, there are no guarantees.)
\begin{defn}[Consistent learner]
\label{defn:consistent}
A transition model learner \textsc{LearnModel} is \emph{consistent} if for all $s \in \S, a \in \A$, the estimate $\hat{P}(S^{t+1} \mid S^t=s, A^t=a)$ converges a.s.~\cite{stout1974almost} to the ground truth $P(S^{t+1} \mid S^t=s, A^t=a)$ as samples are drawn from the latter.
\end{defn}
The following Lemma says, given a consistent learner, a goal, and a policy, we will a.s. either reach the goal, or learn a model under which the policy cannot reach the goal.
\begin{lem}
\label{lem:lemma1}
Suppose that \textsc{LearnModel} is consistent. Given any state $s \in \S$, goal $G$, and policy $\pi$, consider transitions sampled from the ground truth distribution $P$ by repeatedly starting at $s$ and following $\pi$ for $T$ steps.
Let $\hat{P}_t$ be the transition model returned by calling \textsc{LearnModel} on the first $t$ transitions.
Then a.s., either
(1) $G$ is eventually reached; or
(2) the probability that $\pi$ would reach $G$ from $s$ under $\hat{P}_t$ converges to 0 as $t \to \infty$.
\end{lem}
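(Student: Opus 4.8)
The plan is to reduce the almost-sure dichotomy to a deterministic case split on the quantity $p^* \coloneqq \Pr_P[\pi \text{ reaches } G \text{ from } s \text{ within } T \text{ steps}]$, which depends only on the fixed $P, \pi, s, G$ and is therefore not random. The data here is generated by i.i.d.\ rollouts of $\pi$ from $s$ under the ground truth $P$ (the learned model $\hat P_t$ plays no role in data collection), so the rollouts are independent. If $p^* > 0$, each rollout reaches $G$ independently with probability $p^*$, so the probability that $G$ is never reached is $\lim_n (1 - p^*)^n = 0$; hence alternative (1) holds almost surely and we are done. The substantive case is $p^* = 0$, in which $G$ is genuinely unreachable by $\pi$ from $s$ under $P$ (recall $\S$ and $\A$ are finite, so a zero-probability path is a truly impossible one). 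Here $G$ is almost surely never reached, and the goal is to show alternative (2): $\hat p_t \coloneqq \Pr_{\hat P_t}[\pi \text{ reaches } G \text{ from } s \text{ within } T] \to 0$ almost surely.

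First I would pin down where the learner can be trusted. Call a transition $(s', a')$ \emph{genuinely reachable} if $s'$ is visited with positive probability along a rollout of $\pi$ from $s$ under $P$ and $a' = \pi(s')$. There are finitely many such transitions, each is visited in a single rollout with some probability $q > 0$, so by the strong law of large numbers each is visited infinitely often across the i.i.d.\ rollouts almost surely. Consistency of \textsc{LearnModel} then gives $\hat P_t(\cdot \mid s', a') \to P(\cdot \mid s', a')$ almost surely on every such transition. Intersecting these finitely many almost-sure events yields a single probability-one event $E$ on which $\hat P_t \to P$ simultaneously on all genuinely reachable transitions.

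The core of the argument is a first-deviation (or ``cut'') bound. Since the state and action spaces are finite, $\hat p_t$ equals a finite sum over the disjoint minimal paths $\tau = (s_0 = s, a_0, \dots, s_m)$ with $a_j = \pi(s_j)$ and $m \le T$ that first satisfy $G$ at $s_m$. Because $p^* = 0$, each such path has $\Pr_P[\tau] = \prod_{j<m} P(s_{j+1} \mid s_j, a_j) = 0$, so some factor vanishes; let $i$ be the first index with $P(s_{i+1} \mid s_i, a_i) = 0$. By minimality of $i$, all earlier transitions have positive $P$-probability, so $s_i$ is reached from $s$ with positive probability and $(s_i, a_i)$ is genuinely reachable. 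Bounding every other factor by $1$ gives $\Pr_{\hat P_t}[\tau] \le \hat P_t(s_{i+1} \mid s_i, a_i)$, which on $E$ converges to $P(s_{i+1} \mid s_i, a_i) = 0$. As there are only finitely many such paths $\tau$, the finite sum $\hat p_t$ converges to $0$ on $E$, establishing alternative (2).

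The hard part is exactly the gap this cut bound closes: under the flawed model $\hat P_t$, the policy $\pi$ may route through states that are never visited under the true dynamics, where $\hat P_t$ is unconstrained by data and consistency says nothing. The first-deviation decomposition is what forces responsibility for any spurious $\hat P_t$-path to $G$ back onto a transition that genuinely \emph{is} sampled infinitely often, so that consistency can drive its estimated probability---and hence the whole path's probability---to zero. The remaining steps (i.i.d.\ rollout independence, the strong law argument for infinitely-many visits, and the finiteness of the path set) are routine given the finite relational state and action spaces.
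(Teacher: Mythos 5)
Your proof is correct and follows essentially the same route as the paper's: handle the $p^*>0$ case by independence of rollouts, and in the $p^*=0$ case decompose the learned-model success probability over the finitely many paths to $G$, locate the \emph{first} zero-probability transition on each (which is therefore reached with positive true probability and hence sampled infinitely often), and let consistency drive that factor---and thus the whole finite sum---to zero. Your writeup is in fact slightly more explicit than the paper's sketch about why minimality of the first deviation guarantees the cut transition is visited infinitely often, which is the step the paper leaves implicit.
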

\begin{proof}
See Appendix \ref{app:proofs}.
\end{proof}
\begin{theorem}[\glib{} is a.s. recurrent]
\label{thm:recurrent}
Suppose that the environment is recurrent, \textsc{LearnModel} is consistent, and \textsc{Plan} is $\epsilon$-sound. Then for any integer $k > 0$, $\textsc{MC}(\glib{}(k), \I, \S, \A)$ is a.s. recurrent over $\Omega$.
\end{theorem}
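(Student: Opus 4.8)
The plan is to show that \glib{}(k) almost surely makes only finitely many successful plans, after which it degenerates exactly into the \textsc{Random} policy, so that its recurrence over $\Omega$ is inherited directly from Definition~\ref{defn:recurrent}. The central object is the ``$\pi$ found'' branch of \algref{alg:glib}: each planner success commits \glib{} to executing a policy to termination, whereas if all $N$ planning attempts in an \textsc{Explore} call fail, the in-progress policy $\pi$ stays \textbf{null} and the call returns the uniformly random fallback \textsc{Sample}($\A$). Hence, once the planner has succeeded for the last time and its policy has finished executing, $\pi$ is permanently \textbf{null}, every later \textsc{Explore} call falls through to \textsc{Sample}($\A$), and \glib{} behaves identically to \textsc{Random}. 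So the whole argument reduces to bounding the number of planner successes.

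To bound them I would combine finiteness with Lemma~\ref{lem:lemma1}. Because $\S$, $\A$, and the predicate sets $\P,\Q$ are finite and $k$ bounds the conjunction size, there are only finitely many triples $(s,G,\pi)$ consisting of a state, a babbled goal, and a policy the planner could emit. Fix one such triple and suppose \glib{} executes $\pi$ to pursue $G$ from $s$ infinitely often; the corresponding rollouts are repeated independent draws from the ground-truth $P$ started at $s$, which is exactly the regime of Lemma~\ref{lem:lemma1}. Its two cases both yield a contradiction: in case (1) the goal $G$ is reached, so the reached state contains $G$ as a subset and \textsc{SampleGA}, via its novelty filter, never babbles $G$ again; in case (2) the predicted probability that $\pi$ reaches $G$ under $\hat{P}_t$ tends to $0$, hence eventually falls below $\epsilon$, so by $\epsilon$-soundness (Definition~\ref{defn:soundplanner}) the planner stops returning $\pi$ for $(s,G)$. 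Either way the triple is used only finitely often, and since the intersection of the finitely many probability-one events (one per triple) again has probability one, the total number of planner successes is almost surely finite. The main obstacle here is that Lemma~\ref{lem:lemma1} is phrased for a model trained solely on the repeated $(s,\pi)$ rollout stream, whereas \glib{} fits $\hat{P}_t$ to every transition gathered so far; I would close this gap by appealing to consistency (Definition~\ref{defn:consistent}), which is a per-$(s,a)$ convergence guarantee, so the interleaved extra data neither helps nor harms convergence on the $(s,a)$ pairs traversed by $\pi$ from $s$, and the conclusions of the lemma carry over unchanged.

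With the claim in hand, the final step transfers recurrence. On the probability-one event that the last planned execution ends at a finite time $t^*$, the process $(S^t,A^t)_{t>t^*}$ selects actions uniformly at random and is reset to $\I$ every $T$ steps precisely as in the outer environment loop, so from the next episode boundary onward it is distributed exactly as $\textsc{MC}(\textsc{Random},\I,\S,\A)$. Recurrence over $\Omega$ is a tail property unaffected by the finite prefix before $t^*$: for any $(s,a)\in\Omega$ and any $t$, Definition~\ref{defn:recurrent} supplies a later time $t'$ at which the random chain occupies $(s,a)$ with positive probability, and because $t^*<\infty$ almost surely the same positivity holds for \glib{}(k). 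This establishes that $\textsc{MC}(\glib{}(k),\I,\S,\A)$ is almost surely recurrent over $\Omega$, as claimed.
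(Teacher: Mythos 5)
Your proof is correct and follows essentially the same route as the paper's: use finiteness of states, goals, and policies together with Lemma~\ref{lem:lemma1}, $\epsilon$-soundness, and the novelty filter to show that \glib{} almost surely stops producing plans after finitely many planner successes, after which it coincides with \textsc{Random} and inherits recurrence from Definition~\ref{defn:recurrent}. Your bookkeeping over $(s, G, \pi)$ triples rather than over goals, and your explicit handling of the interleaved-data issue in applying Lemma~\ref{lem:lemma1} and of the tail-property step, are minor refinements of the same argument.
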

\begin{proof}
See Appendix \ref{app:proofs}.
\end{proof}

\begin{defn}[Sufficiently representative]
\label{defn:suffrep}
Given a consistent learner \textsc{LearnModel}, a set of state-action pairs $\Gamma \subseteq \S \times \A$ is \emph{sufficiently representative} if the learned transition model $\hat{P}$ converges a.s. to the ground truth model $P$ as transitions starting from $(s, a) \in \Gamma$ are drawn from $P$.
\end{defn}
\begin{cor}
Suppose $\Omega$ is sufficiently representative.
Then under the assumptions of Theorem \ref{thm:recurrent},
the model learned from following \glib{} will converge a.s. to the ground truth model.
\end{cor}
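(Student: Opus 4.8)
The plan is to compose Theorem \ref{thm:recurrent} with Definition \ref{defn:suffrep}, instantiated at $\Gamma = \Omega$. Intuitively, Theorem \ref{thm:recurrent} guarantees that \glib{} keeps revisiting every reachable transition, which supplies an ever-growing sample from the ground-truth distribution at each such transition; sufficient representativeness of $\Omega$ then converts those samples into almost-sure convergence of $\hat{P}$ to $P$. The corollary is therefore essentially a chaining of two almost-sure statements, and the only real content lies in reading each hypothesis in the form the other one needs.

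Concretely, I would proceed in three steps. First, apply Theorem \ref{thm:recurrent}: under the stated assumptions (recurrent environment, consistent \textsc{LearnModel}, $\epsilon$-sound \textsc{Plan}), the process generated by \glib{} with parameter $k$ is almost surely recurrent over $\Omega$. I would read this, via Definition \ref{defn:recurrent}, as the statement that almost surely every $(s,a) \in \Omega$ is visited infinitely often; the promotion from ``positive probability of return at arbitrarily large times'' to ``infinitely many returns almost surely'' is justified by the finiteness of $\S$ and $\A$, since in a finite state space a recurrent state is hit infinitely often with probability one, and since $\Omega$ is finite the events ``$(s,a)$ is visited infinitely often'' may be intersected over all $(s,a) \in \Omega$ while remaining probability one. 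Second, I would observe that by the Markov property of the environment each visit to $(s,a)$ produces a successor drawn from the ground-truth $P(\cdot \mid s,a)$ independently of the history given $(s,a)$; hence infinitely many visits furnish an unbounded stream of genuine samples from $P(\cdot \mid s,a)$, simultaneously for every $(s,a) \in \Omega$. This is exactly the sampling regime required by Definition \ref{defn:suffrep}. Third, I would invoke the hypothesis that $\Omega$ is sufficiently representative, together with consistency of \textsc{LearnModel} (supplied by the Theorem \ref{thm:recurrent} assumptions), to conclude that $\hat{P} \to P$ almost surely.

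The main obstacle is the bookkeeping around the ``almost surely'' qualifier in the first step, complicated by the fact that \glib{} is adaptive: the policy it executes depends on the evolving $\hat{P}$, so the induced stochastic process is not a single time-homogeneous Markov chain. I would argue that this non-stationarity is already absorbed by the almost-sure conclusion of Theorem \ref{thm:recurrent}, so that one may condition on a realization of the learning trajectory on which recurrence over $\Omega$ holds and then apply the standard finite-state argument for infinite returns. A secondary point of care is that the first and third steps each hold on a probability-one event (infinite visits to all of $\Omega$, and convergence given those visits); since both are almost sure, their conjunction is almost sure, which is precisely what the corollary asserts.
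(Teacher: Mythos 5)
Your proof takes essentially the same route as the paper's: invoke Theorem \ref{thm:recurrent} to conclude that every $(s,a) \in \Omega$ is a.s. revisited infinitely often, then apply Definition \ref{defn:suffrep} with $\Gamma = \Omega$ to obtain a.s. convergence of $\hat{P}$ to $P$. Your additional care about promoting recurrence to infinitely many visits, the Markov sampling argument, and intersecting the probability-one events fills in details the paper's proof sketch leaves implicit, but the argument is the same.
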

\begin{proof}[Proof sketch]
By \thmref{thm:recurrent}, the Markov chain induced by \glib{} is a.s. recurrent over $\Omega$; thus, all state-action pairs $(s, a) \in \Omega$ are revisited infinitely many times. By Definition \ref{defn:suffrep}, $\hat{P}$ will a.s. converge to $P$.
\end{proof}

The consistency and $\epsilon$-soundness assumptions are mild and hold, respectively, for the implementations of \textsc{LearnModel} and \textsc{Plan} that we use in experiments.
The assumption of environment recurrence also holds for the environments we consider in our problem setting, because the interaction is episodic; every $T$ timesteps, a new initial state is sampled, guaranteeing that all reachable states will get visited infinitely often under a uniformly random policy.

The challenge of proposing a practical exploration method with strong sample-complexity guarantees still remains open.
\citet{walsh2010efficient} and \citet{mehta2011efficient} provide algorithms with guarantees that are intractable in practice; \citet{rodrigues2011active} and \citet{lang2012exploration} provide practical algorithms without guarantees.
To compare \glib{} against previous practical methods, we now turn to empirical investigations.
\section{Experiments}
\label{sec:experiments}

In this section, we present empirical results for \glib{} and several baselines.
We begin by describing the experimental setup, with additional details in Appendix \ref{app:details}.

\subsection{Experimental Setup}

\textbf{Domains.} We evaluate on six domains: three classical PDDL planning tasks, two benchmark PPDDL planning tasks, and one simulated robotic manipulation task.
\begin{tightlist}
\item \textbf{Blocks}~\cite{ipc}. This is the classic IPC deterministic Blocksworld domain, containing an agent that can pick, place, stack, and unstack blocks on a table. We train and evaluate on problems containing between 5 and 7 objects, yielding between 26 and 50 state literals.
\item \textbf{Gripper}~\cite{ipc}. This is the classic IPC deterministic Gripper domain, containing an agent that can move, pick, and drop a ball. We train and evaluate on problems containing between 8 and 16 objects, yielding between 28 and 68 state literals.
\item \textbf{Keys and Doors}~(\figref{fig:timeline}). This deterministic domain, inspired by \emph{Lightworld}~\cite{lightworld}, features a robot navigating a gridworld with rooms to reach a goal. There are keys throughout the world, each unlocking some room. We train and evaluate on problems containing between 35 and 93 objects, yielding between 132 and 1169 state literals.
\item \textbf{Triangle Tireworld}~\cite{ipc2008}. Also considered by the two closest prior works, \rex{}~\cite{lang2012exploration} and \ilm{}~\cite{ng2019incremental}, this is the probabilistic IPC Tireworld domain, containing an agent navigating a triangle-shaped network of cities to reach a goal.
With each move, there is some probability that the agent will get a flat tire, and tires can only be changed at certain cities.
We train and evaluate on problems containing between 6 and 15 objects, yielding between 43 and 241 state literals.
\item \textbf{Exploding Blocks}~\cite{ipc2008}. Also considered by \citet{lang2012exploration} and \citet{ng2019incremental}, this is the probabilistic IPC version of Blocks, in which every time the agent interacts with an object, there is a chance that this object is destroyed forever. Therefore, even the optimal policy cannot solve the task 100\% of the time. We train and evaluate on problems containing between 5 and 7 objects, yielding between 31 and 57 state literals.
\item \textbf{PyBullet}. Pictured in \figref{fig:teaser} and inspired by tasks considered by \citet{pasula2007learning} and \citet{lang2012exploration}, this domain can be understood as a continuous and stochastic version of Blocks; a robot simulated in the PyBullet physics engine~\cite{PyBullet} picks and stacks blocks on a table. This domain involves realistic physics and imperfect controllers (e.g., the robot sometimes drops a block when attempting to pick it up); therefore, robustness to stochasticity is important. 
We hand-defined a featurizer that converts from the raw (continuous) state to (discrete) predicate logic, but the state transitions are computed via the simulator. We train and evaluate on problems containing 5 objects, yielding 37 state literals.
\end{tightlist}

\textbf{Exploration methods evaluated:} \begin{tightlist}
    \item \textbf{Oracle}. This method has access to the ground truth model and is intended to provide an approximate upper bound on the performance of an exploration strategy.
    The oracle picks an action for the current state whose most likely predicted effects under the current learned model and ground truth model do not match. If all match, the oracle performs breadth-first search (with horizon 2) in the determinized models, checking for any future mismatches, and falling back to action babbling when none are found. We do not run the oracle for the PyBullet domain because there are no ground truth NDRs for it.
    \item \textbf{Action babbling}. A uniformly random exploration policy over the set of ground actions in the domain.
    \item \textbf{\irale{}}~\cite{rodrigues2011active}. This exploration method uses the current learned model for action selection, but does not perform lookahead with it.
    \item \textbf{\expo{}}~\cite{gil1994learning}. This operator refinement method allows for correcting errors in operators when they are discovered. Since we do not have goals at training time, we run action babbling until an error is discovered.
    \item \textbf{\rex{}}~\cite{lang2012exploration} in E$^3$-exploration mode.
    \item \textbf{\ilm{}}~\cite{ng2019incremental}, which builds on \rex{} by introducing a measure of model reliability.
    \item \textbf{\glibg{} (ours)}. \glib{} in \emph{ground} mode with $k=1$.
    \item \textbf{\glibl{} (ours)}. \glib{} in \emph{lifted} mode with $k=2$. We use a larger value of $k$ in \emph{lifted} mode than \emph{ground} mode because there are typically far fewer lifted goals than ground ones for a given $k$ value, and our preliminary results found that \glibl{} with $k=1$ was never better than \glibl{} with $k=2$.
\end{tightlist}

\textbf{Evaluation.} We evaluate the learned models in terms of prediction error, measured as the percentage of randomly sampled ground-truth transitions that are not predicted by the learned model as most likely, and planning performance, measured on a suite of planning problems. We ensure that all goals in the planning problems are sufficiently large (length 3 or more) conjunctions of literals so that they could not possibly be babbled by the agent during \glib{} exploration.

\subsection{Results and Discussion}
\begin{figure}[t]
  \centering
    \noindent
    \includegraphics[width=\columnwidth]{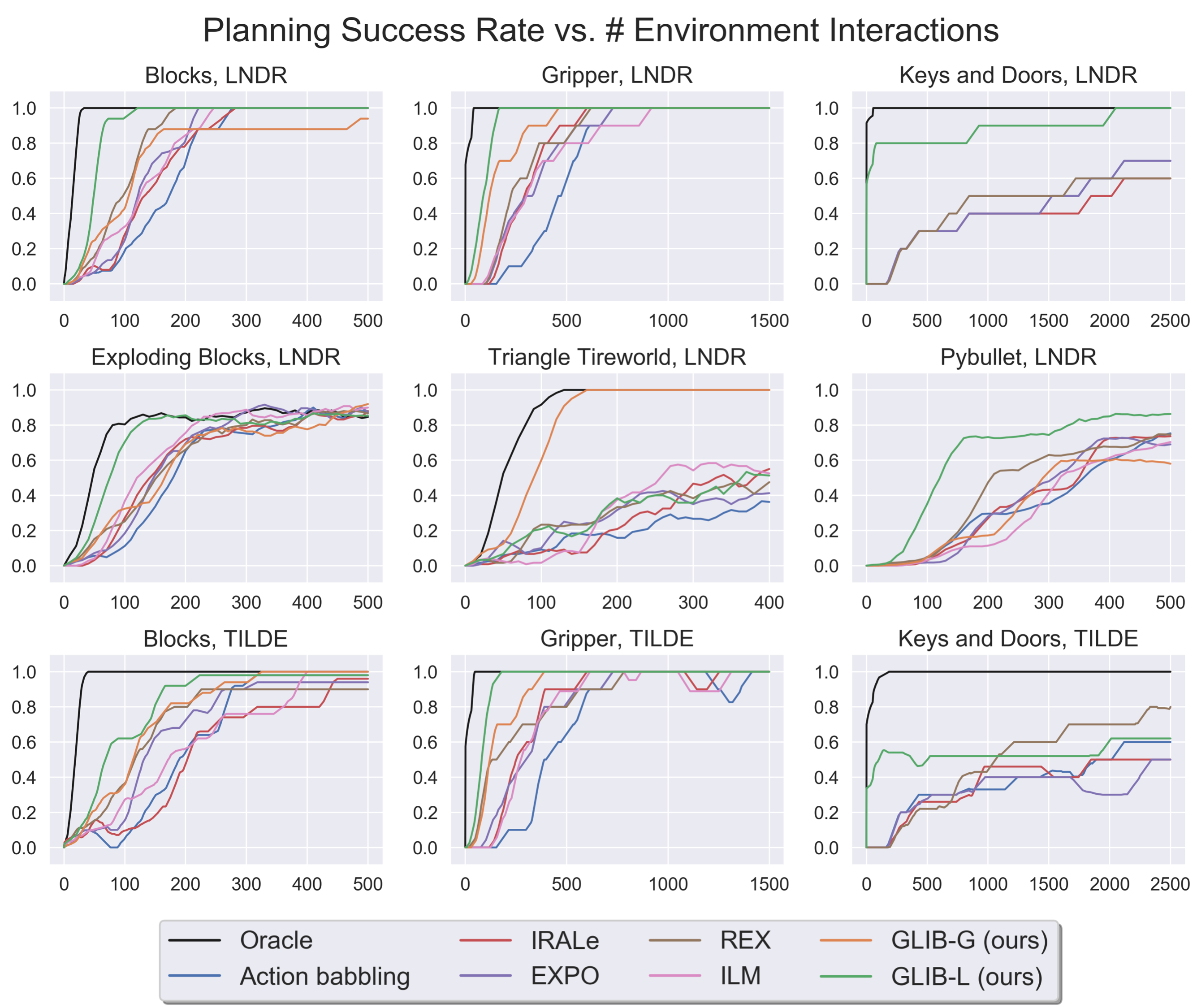}
    \caption{Success rate on planning problems (higher is better) versus number of environment interactions. Top two rows use the \zpk{} model learner, and bottom row uses the \textsc{tilde} model learner (which only works on deterministic domains). All curves show averages over 10 seeds. 
    Standard deviations are omitted for visual clarity. In all domains, \glibl{} performs substantially better than all other methods, except in Triangle Tireworld, where \glibg{} does so. Oracle is not run in PyBullet because a ground truth model is not available. \glibg{} and \ilm{} are not run on Keys and Doors due to the large space of ground literals in this domain.}
  \label{fig:resultssucc}
\end{figure}

\figref{fig:resultssucc} shows all results for planning problem success rates as a function of the number of environment interactions.
\figref{fig:resultspred} shows all results for prediction error rates as a function of the number of environment interactions. 
It is clear, especially from \figref{fig:resultssucc}, that \glib{} performs substantially better than all other approaches, whether in \emph{ground} mode for Triangle Tireworld or in \emph{lifted} mode for all other domains. In some domains, such as Keys and Doors, exploration with \glib{} is up to two orders of magnitude more data-efficient than all the baselines. In the Keys and Doors domain, to open the door to a room, the agent must first move to and pick up the key to unlock that door; in these bottleneck situations, \glib{} is able to shine, as the agent often sets goals that drive itself through and beyond the bottleneck.

\glibg{} sharply outperforms \glibl{} in Triangle Tireworld because there are very few predicates in this domain; just by randomly interacting with the world for a few timesteps, the agent can see nearly all possible conjunctions of two lifted literals, and so \glibl{} with $k=2$ has no remaining goals to babble. On the other hand, \emph{ground} goals continue to be interesting, and so \glibg{} allows the agent to set itself goals such as reaching previously unvisited locations. This result illustrates that the choice of \glibl{} or \glibg{} depends greatly on properties of the domain.

These results suggest that \glib{} is a strong approach for exploration; a natural next question is how long \glib{} takes. In \tabref{tab:timings} of Appendix \ref{app:timings}, we show that the per-iteration speed of \glib{}, especially in \emph{lifted} mode, is competitive with that of the two closest prior works, \rex{} and \ilm{}.
We found filtering out static and mutex goals was necessary for making \glib{}'s speed competitive, but did not affect Figures \ref{fig:resultssucc} and \ref{fig:resultspred}.

\begin{figure}[t]
  \centering
    \noindent
    \includegraphics[width=\columnwidth]{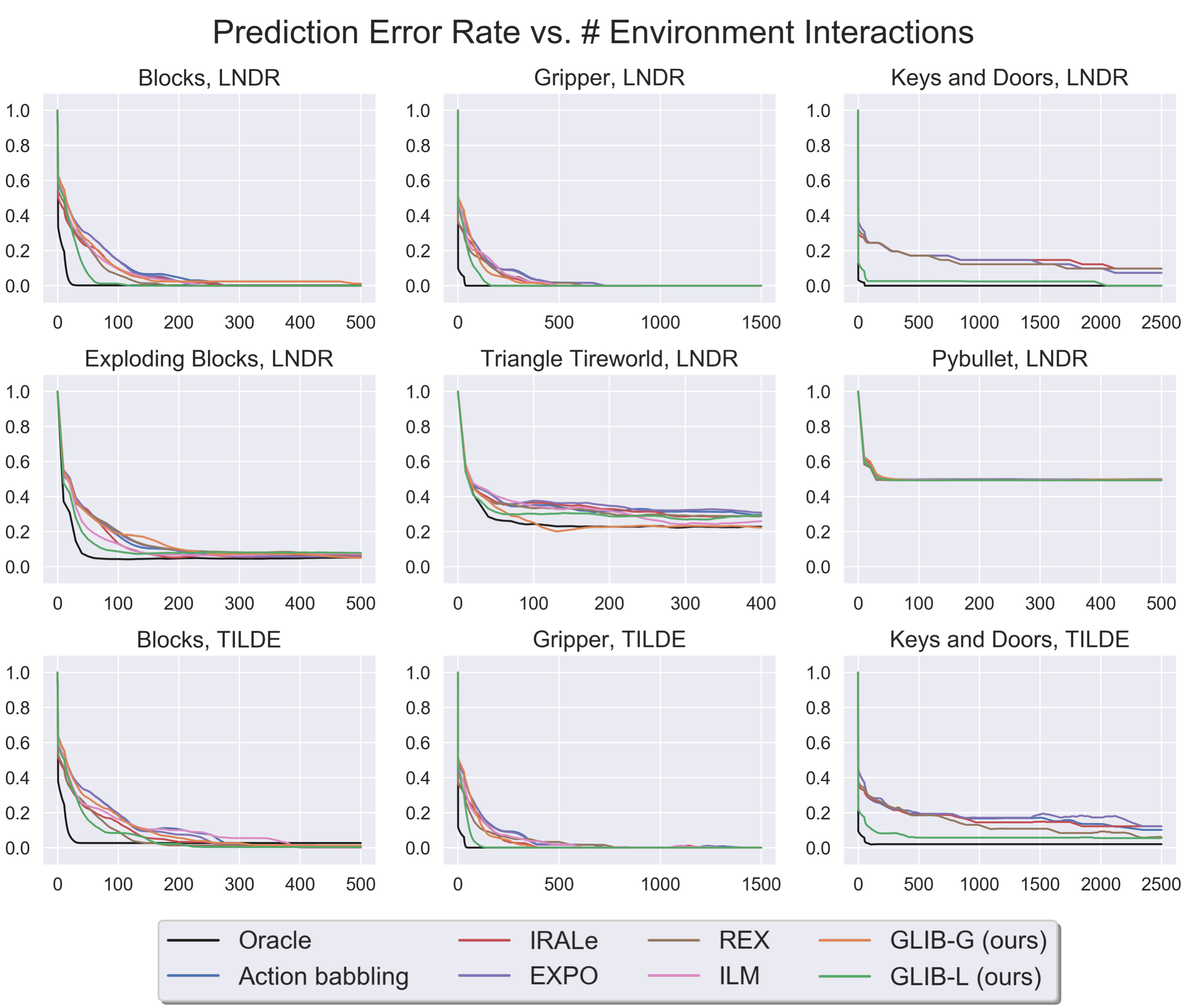}
    \caption{Prediction error rates (lower is better); see \figref{fig:resultssucc} caption for details. These results together with \figref{fig:resultssucc} make clear that modest advantages in prediction error can translate into dramatic gains during planning.}
  \label{fig:resultspred}
\end{figure}

\section{Conclusion}

We have introduced Goal-Literal Babbling (\glib{}) as a simple, efficient exploration method for transition model learning in relational model-based reinforcement learning.
We showed empirically that \glib{} is a very strong exploration strategy, in some cases achieving up to two orders of magnitude better sample efficiency than prior approaches.

There are several useful directions for future work. One is to develop better fallback strategies, for instance, planning to get as close to a babbled goal as possible when the goal cannot be reached. While this would require an additional assumption in the form of a metric over the state space, it may help the agent better exploit the implicit Voronoi bias resulting from bootstrapping exploration with goal-directed search under the current learned model. 
Another line of work could be to combine \glib{} with other exploration methods; for instance, one could combine the insights of \rex{} and \glib{}, planning for long horizons but only within known or ``trusted'' parts of the state space under the current model.

\newpage
\section*{Acknowledgements}
We would like to thank Ferran Alet and Caris Moses for
their valuable comments on an initial draft. We gratefully
acknowledge support from NSF grant 1723381; from AFOSR grant FA9550-17-1-0165; from ONR
grant N00014-18-1-2847; from the Honda Research Institute; from MIT-IBM Watson Lab; and from
SUTD Temasek Laboratories. Rohan and Tom are supported by NSF Graduate Research Fellowships.
Any opinions, findings, and conclusions or recommendations expressed in this material are those of
the authors and do not necessarily reflect the views of our sponsors.
\bibliography{glibbib}

\cleardoublepage
\appendix

\section{Additional Experiment Details}
\label{app:details}

\subsection{Incremental Model Learning}
\citet{ng2019incremental} propose an extension of \zpk{} to the incremental regime, where the transition model is progressively improved as more data is collected.
Their strategy is to penalize NDR sets that deviate far from the previously learned set during the greedy search.
We found that simply initializing the greedy search with the previously learned NDR set was sufficient to attain consistently good performance.
For \textsc{tilde}, we introduced two extensions to the method that allows it to be applied to the model-learning setting: we allow for lifted literal classes (in order to describe lifted effects), and we allow multiple output literals for a single input (in order to capture conjunctive effects). 
Note that \textsc{tilde} is only applicable in deterministic domains.
To improve overall runtime, we retrain the model only when a new transition disagrees with the most likely transition predicted by the current model.

\subsection{Software and Hyperparameters}
For interacting with relational environments, we use the PDDLGym library~\cite{pddlgym}, version 0.0.1. All experiments were conducted on a quad-core AMD64 processor with 4GB RAM, in Ubuntu 18.04.

We use $T=25$ as the episode length for all domains, except for Triangle Tireworld which uses $T=8$ and PyBullet which uses $T=10$. We chose these values for Tireworld and PyBullet because $T=25$ proved to be unnecessarily large in these domains.
All methods use FF-Replan \cite{ffreplan} with single-outcome determinization as the planner; FF-Replan uses Fast-Forward~\cite{ff}. All planning calls have a timeout of 10 seconds. We set $N$, the number of sampling tries in \algref{alg:glib}, to 100. Model learning has a timeout of 3 minutes per iteration, at which point we use the best model discovered so far. We did not perform much tuning on these hyperparameters; the results are already quite strong, but they could be improved even further via a grid search.

\section{Timing Results}
\label{app:timings}
Table \ref{tab:timings} presents timing results for \glib{} and baseline methods, showing that \glib{}'s strong performance does not come at the expense of time.
In the Gripper domain, \ilm{} is quite slow; this is because calculating the count for the current state on each iteration requires looping over the dataset to estimate applicability of each NDR.

\begin{table}[t]
  \centering
  \resizebox{0.8\columnwidth}{!}{
  \tabcolsep=0.1cm{
  \begin{tabular}{c|cccccc}
    \toprule[1.5pt]
     & \textbf{BL} & \textbf{GR} & \textbf{KD} & \textbf{EB} & \textbf{TT} & \textbf{PY}\\
    \midrule[2pt]
    \textbf{Action babbling} & 0.0 & 0.0 & 0.0 & 0.0 & 0.0 & 0.0\\
    \midrule
    \textbf{\irale{}} & 0.0 & 0.0 & 0.3 & 0.0 & 0.0 & 0.0\\
    \midrule
    \textbf{\expo{}} & 0.1 & 0.0 & 0.4 & 0.1 & 0.0 & 0.1\\
    \midrule
    \textbf{\rex{}} & 0.0 & 0.1 & 1.1 & 0.0 & 0.0 & 0.2\\
    \midrule
    \textbf{\ilm{}} & 0.2 & 28.9 & --- & 0.2 & 0.4 & 0.6\\
    \midrule[1.5pt]
    \textbf{\glibg{} (ours)} & 0.2 & 0.1 & --- & 0.4 & 0.4 & 0.8\\
    \midrule
    \textbf{\glibl{} (ours)} & 0.1 & 0.1 & 0.2 & 0.1 & 0.2 & 0.2\\
    \bottomrule[1.5pt]
  \end{tabular}}}
  \caption{Average seconds per iteration taken by each exploration method. Each column is a domain: BL = Blocks, GR = Gripper, KD = Keys and Doors, EB = Exploding Blocks, TT = Triangle Tireworld, PY = PyBullet. Every number is an average over 10 random seeds. The number 0.0 indicates that the time is $<0.05$, not precisely zero. All times are obtained using the \zpk{} model learner. We can see that the speed of \glib{}, especially in \emph{lifted} mode, is competitive with that of all baselines, especially the ones which perform lookahead for exploration (\rex{} and \ilm{}). \glibg{} and \ilm{} are intractable on Keys and Doors because the space of ground literals is prohibitively large in this domain.}
  \label{tab:timings}
\end{table}

\section{Proofs}
\setcounter{lem}{0}
\setcounter{theorem}{0}
\label{app:proofs}
\begin{lem}
Suppose that \textsc{LearnModel} is consistent. Given any state $s_0 \in \S$, goal $G$, and policy $\pi$, consider transitions sampled from the ground truth distribution $P$ by repeatedly starting at $s_0$ and following $\pi$ for $T$ steps.
Let $\hat{P}_t$ be the transition model returned by calling \textsc{LearnModel} on the first $t$ transitions.
Then a.s., either
(1) $G$ is eventually reached; or
(2) the probability that $\pi$ would reach $G$ from $s_0$ under $\hat{P}_t$ converges to 0 as $t \to \infty$.
\end{lem}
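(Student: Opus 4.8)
The plan is to dichotomize on the true success probability of the policy. Let $p$ denote the probability that, starting from $s_0$ and following $\pi$ for $T$ steps under the ground-truth model $P$, some visited state satisfies $G$. Everything hinges on whether $p>0$ or $p=0$: I would show that the first case forces outcome (1) and the second forces outcome (2).

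First I would treat the case $p>0$. Because we reset to $s_0$ and replay the fixed policy $\pi$ with fresh randomness each time, the successive $T$-step episodes are i.i.d.\ trials, each reaching $G$ with probability exactly $p>0$. Hence the probability of never reaching $G$ is $\prod_{j\ge 1}(1-p)=0$ (equivalently, apply the second Borel--Cantelli lemma to the independent events ``episode $j$ reaches $G$''), so $G$ is reached a.s.\ and outcome (1) holds.

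The substantive case is $p=0$. Here I would introduce $R_P\subseteq\S$, the set of states reachable from $s_0$ under $\pi$ within $T$ steps with positive probability in $P$; note $p=0$ means no state of $R_P$ satisfies $G$. Each $s\in R_P$ is visited with some fixed positive probability in every episode, so by Borel--Cantelli the pair $(s,\pi(s))$ is sampled infinitely often, and consistency of \textsc{LearnModel} gives $\hat{P}_t(\cdot\mid s,\pi(s))\to P(\cdot\mid s,\pi(s))$ a.s.; intersecting over the finite set $R_P$ preserves the a.s.\ guarantee. The key observation is that to reach $G$ under $\hat{P}_t$ the simulated trajectory must leave $R_P$, since $R_P$ contains no $G$-state and $s_0\in R_P$. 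But the \emph{first} step out of $R_P$ uses a transition $(s,\pi(s))\to s'$ with $s\in R_P$, $s'\notin R_P$, which by definition of $R_P$ has $P(s'\mid s,\pi(s))=0$; consistency therefore drives $\hat{P}_t(s'\mid s,\pi(s))\to 0$. Letting $q_t:=\max_{s\in R_P}\sum_{s'\notin R_P}\hat{P}_t(s'\mid s,\pi(s))$, a maximum of finitely many vanishing quantities so that $q_t\to0$ a.s., a union bound over the at most $T$ steps shows the probability of ever escaping $R_P$, and hence of reaching $G$, under $\hat{P}_t$ is at most $T\,q_t\to 0$, giving outcome (2).

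I expect the escape argument in the $p=0$ case to be the main obstacle, precisely because consistency only controls $\hat{P}_t$ on state-action pairs that are actually sampled, i.e.\ those inside $R_P$, whereas the reach probability under $\hat{P}_t$ could in principle be inflated by transitions into states never seen under $P$. The resolution is the containment idea above: any path to $G$ must cross the boundary of $R_P$, and every boundary-crossing transition has true probability zero, so its learned probability vanishes. Care is needed to make the union bound over the horizon and over the finitely many boundary transitions rigorous, and to confirm that pointwise convergence of the finite-support distributions $\hat{P}_t(\cdot\mid s,\pi(s))$ is exactly what consistency delivers.
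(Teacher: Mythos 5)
Your proof is correct and follows essentially the same route as the paper's: split on whether the true reach probability is positive, use Borel--Cantelli for the positive case, and in the zero case observe that any path to $G$ must use a transition of true probability zero out of the positively-reachable set, whose learned probability is driven to zero by consistency since its source state-action pair is sampled infinitely often. Your packaging via the reachable set $R_P$ and the boundary-crossing union bound is a slightly cleaner way of organizing the aggregation step than the paper's enumeration of the finitely many goal-reaching sequences, but the underlying argument is the same.
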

\begin{proof}[Proof sketch]
If the probability that $\pi$ reaches $G$ from $s_0$ under the ground truth model is positive, then $G$ will a.s. be reached.
Otherwise, for any sequence of states and actions of length at most $T$ that starts at $s_0$, follows $\pi$, and ends at $G$, there must be some transition $(s, a, s')$ s.t. $P(s' | s, a) = 0$. 
Consider the first such transition $(s, a, s')$ in a particular sequence.
With probability 1, the state-action pair $(s, a)$ will be seen in the sampled transitions infinitely many times.
Since \textsc{LearnModel} is consistent, for any $\epsilon > 0$, there will a.s. be some time $\tau$ such that for all $t > \tau, \hat{P}_t(s' | s, a) < \epsilon$; the probability of the overall sequence reaching $G$ must also then be less than $\epsilon$.
Thus, since all of the sequences starting at $s_0$, following $\pi$, and ending at $G$ a.s. have probabilities converging to 0, and there are finitely many sequences given that $\S, \A$ and $T$ are finite, the total probability of $\pi$ reaching $G$ from $s_0$ under $\hat{P}$ also a.s. converges to 0.
\end{proof}

\begin{theorem}[\glib{} is a.s. recurrent]
Suppose that the environment is recurrent, \textsc{LearnModel} is consistent, and \textsc{Plan} is $\epsilon$-sound. Then for any integer $k > 0$, $\textsc{MC}(\glib{}(k), \I, \S, \A)$ is a.s. recurrent over $\Omega$.
\end{theorem}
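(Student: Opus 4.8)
The plan is to show that, almost surely, \glib{} stops invoking its planner after finitely many steps and thereafter behaves exactly like the uniformly random policy, so that recurrence transfers directly from the recurrent-environment assumption (Definition~\ref{defn:recurrent}). Throughout I would use that $\S$ and $\A$ are finite, so there are only finitely many goals $G$ (conjunctions of at most $k$ literals), finitely many policies $\pi : \S \to \A$, and hence only finitely many triples $(s_0, G, \pi)$ consisting of a start state, a babbled goal, and a plan that \textsc{Plan} might return for that goal from that state.

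The core step is to bound the number of times \glib{} ever commits to a plan. Fix a triple $(s_0, G, \pi)$ and suppose, toward a contradiction, that \glib{} commits to $\pi$ for goal $G$ while in state $s_0$ at infinitely many time steps. Each such commitment triggers a rollout of $\pi$ from $s_0$ under the true model $P$, so the state--action pairs along these rollouts are sampled infinitely often; since \textsc{LearnModel} is consistent (Definition~\ref{defn:consistent}), Lemma~\ref{lem:lemma1} applies to $(s_0, G, \pi)$ and tells us that almost surely either (1) $G$ is eventually reached, or (2) the probability that $\pi$ reaches $G$ under $\hat{P}_t$ converges to $0$. Case (1) is impossible under the assumption: once $G$ is reached it appears as a subset of an observed state, so \textsc{SampleGA} will never babble $G$ again and the triple cannot recur. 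Case (2) is also impossible: by $\epsilon$-soundness (Definition~\ref{defn:soundplanner}), every time \textsc{Plan} returns $\pi$ for $G$ its success probability under the current $\hat{P}$ is at least $\epsilon > 0$, which cannot hold at infinitely many times if that probability tends to $0$. Hence each triple is committed to only finitely often almost surely, and since there are finitely many triples, \glib{} commits to a plan only finitely many times in total, almost surely.

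It follows that there is an almost surely finite time $\tau$ after which \glib{} never sets a new plan; once any plan in progress at $\tau$ terminates (within $T$ further steps), every subsequent action is the random fallback \textsc{Sample}($\A$), while the episodic reset $s \sim \I$ continues to occur every $T$ steps. Thus the tail of \glib{}'s trajectory is distributed exactly as $\textsc{MC}(\textsc{Random}, \I, \S, \A)$, which is recurrent over $\Omega$ by assumption, and transferring this positive-probability recurrence condition along the (full-probability) event that $\tau$ is finite yields that $\textsc{MC}(\glib{}(k), \I, \S, \A)$ is almost surely recurrent over $\Omega$. The hard part will be the second paragraph: Lemma~\ref{lem:lemma1} is stated for rollouts that all start at the same $s_0$ and use the same $\pi$, whereas in \glib{} these rollouts are interleaved with arbitrary other behavior and $\hat{P}_t$ is trained on all transitions at once, so some care is needed to argue that consistency still forces the per-triple success probability to vanish when the triple recurs at non-consecutive times. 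The remaining work — the finiteness bookkeeping and the transfer of recurrence from the random tail back to the full chain, accounting for the random time $\tau$ and the periodic $\I$-resets — is routine.
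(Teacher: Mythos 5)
Your proposal is correct and follows essentially the same route as the paper's proof: pigeonhole over the finitely many (start state, goal, policy) triples, apply Lemma~\ref{lem:lemma1} to rule out infinite babbling of any goal via the contradiction between $\epsilon$-soundness and the vanishing success probability (with goal novelty ruling out the ``goal reached'' branch), and then transfer recurrence from the random fallback. The interleaving subtlety you flag at the end --- that Lemma~\ref{lem:lemma1} assumes consecutive rollouts from a fixed $s_0$ and $\pi$ while \glib{} interleaves them with other behavior --- is real and is glossed over by the paper's proof as well, so you are if anything slightly more careful than the original.
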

\begin{proof}
We begin by showing that a.s., any goal can only be babbled by \glib{} finitely many times. To see this, suppose toward a contradiction that there is a goal $G$ that, with measure greater than 0, is babbled infinitely many times. 
Because the state space is finite, there must exist some starting state $s \in \S$ from which $G$ is babbled infinitely many times. 
Because both the state space and the action space are finite, the space of policies is also finite; therefore, there must exist some policy $\pi$ for achieving $G$ from $s$ that is returned by \textsc{Plan} infinitely many times, but never successfully reaches $G$ (because \glib{} only babbles novel goals). 
Lemma \ref{lem:lemma1} states that a.s., $G$ is eventually reached or eventually considered unreachable, within probability $\epsilon$, under the learned model.
In the latter case, since \textsc{Plan} is $\epsilon$-sound, and since goals are only babbled if some policy is found for achieving that goal, $G$ would be babbled only finitely many times.
Thus we have a contradiction; a.s., each goal is babbled by \glib{} only finitely many times.

Since any goal can a.s. only be babbled finitely many times, and there are finitely many goals, there exists a timestep after which \glib{} a.s. has no more goals to babble. After this, \glib{} will constantly fall back to taking random actions, so its behavior will become equivalent to \textsc{Random}. The a.s. recurrence of \glib{} follows from Definition \ref{defn:recurrent}.
\end{proof}

\end{document}